\newcommand{\w}{\omega}
\newcommand{\X}{\mathcal{X}}
\newcommand{\W}{\mathcal{W}}
\newcommand{\NP}{\textit{NP}}
\newcommand{\coNP}{\textit{coNP}}
\newcommand{\NPO}{\textit{NPO}}
\newcommand{\PSPACE}{\mbox{\textit{PSPACE}}}
\newcommand{\EXPTIME}{\mbox{\textit {EXPTIME}}}
\title{The Complexity of Campaigning}
\author{Cory Siler \and Luke Harold Miles \and Judy Goldsmith}
\institute{University of Kentucky, Lexington, Kentucky, United States \\
\email{jcsi225@g.uky.edu, luke.lambda@uky.edu, goldsmit@cs.uky.edu}
}
\begin{document}

\maketitle

\begin{abstract}
In ``The Logic of Campaigning'', Dean and Parikh consider a candidate making campaign statements to appeal to the voters. They model these statements as Boolean formulas over variables that represent stances on the issues, and study optimal candidate strategies under three proposed models of voter preferences based on the assignments that satisfy these formulas. We prove that voter utility evaluation is computationally hard under these preference models (in one case, $\#P$-hard), along with certain problems related to candidate strategic reasoning. Our results raise questions about the desirable characteristics of a voter preference model and to what extent a polynomial-time-evaluable function can capture them.
\end{abstract}

\section{Introduction}

In light of some fairly surprising election outcomes around the world, we are very interested in understanding how politicians construct their platforms. For instance, what motivates many candidates to speak in platitudes that reveal little information about their views? On the other hand, what motivates candidates to commit to specific and sometimes audacious policies? The focus of this paper is a logical formalism introduced by \citeauthor{dean2011logic} \citep{dean2011logic} (and extended by \citeauthor{parikh2015strategy} \citep{parikh2015strategy}) that aims to explain candidates' choices of campaign statements to make; these statements are modeled as propositional formulas over variables representing stances on issues, and \citeauthor{dean2011logic} consider different definitions of voters' utility for a candidate as functions of the possible sets of policies that the candidate might implement based on these statements.

Political scientists have also taken interest in candidates' decisions about what to say when campaigning; \citeauthor{petrocik1996issue} \cite{petrocik1996issue} found empirical evidence that a candidate will try to focus on issues where the candidate has a good record and their opponents have bad records.
Game theorists have a shared interest with \citeauthor{dean2011logic} in what might motivate a candidate to be ambiguous \cite{aragones2000strategic,aragones2002ambiguity,baghdasaryan2016set}.
The game-theoretic
models take into account the interaction between multiple candidates (and in the case of \citeauthor{baghdasaryan2016set}'s model \cite{baghdasaryan2016set}, voters' uncertainty about their most-preferred policies), but often with simplified representations of a platform (e.g., points on a one-dimensional spectrum or probability distributions over a small set of alternatives). In contrast, \citeauthor{dean2011logic}'s framework abstracts away details of the electoral system like multi-agent interactions and voter strategy to focus on the implications for an individual candidate of communicating using more expressive logic-based statements.

This expressivity, however, brings computational costs. In this paper, we consider the computational complexity of problems related to voter and candidate reasoning. Although \citeauthor{dean2011logic}'s formulations capture many desirable characteristics of possible voters, and go a long way toward explaining the assumptions about voters that both vague and overspecific candidates might be making, our results raise some questions about whether these are the right models of how voters evaluate candidates' platforms.

In Section \ref{sec:prelims} we introduce \citeauthor{dean2011logic}'s model and some computational complexity classes we will use later. We then find computational complexity results for problems related to the model --- in Section \ref{sec:utilityeval}, evaluating voters' utility for a candidate; in Section \ref{sec:optimizing}, choosing campaign statements that optimize total voter utility for the candidate; and in Section \ref{sec:thresh}, choosing campaign statements that motivate enough individual voters to vote for the candidate. In Section \ref{sec:conclusion}, we conclude with directions for future work in modeling campaigns, including questions about desirable characteristics of voter evaluation.

\section{Preliminaries}
\label{sec:prelims}

\subsection{Candidates, Voters, and Statements}

In \citeauthor{dean2011logic}'s model, political views are expressed in terms of Boolean variables (atomic propositions) $\X = \{x_1, \dots, x_n\}$ (e.g., $x_1=$ ``Every citizen is entitled to a free pony.'', $x_2=$ ``Tooth-brushing should be mandatory.'', $x_3=$ ``We must invest in zombie-based renewable energy sources.''). A candidate makes statements about their platform in the form of propositional formulas over these variables (e.g., $ \neg x_1$, or $x_1 \rightarrow x_2 $). 
The candidate's current \emph{theory}\footnote{Some sources, particularly in the belief revision literature, use the term \emph{belief base}.}
$T$ consists of the statements the candidate has issued so far and their logical closure. In our discussion of complexity, we will assume that $T$ is given as a set of statements and that anything in the logical closure besides the statements themselves must be computed. 
In general we assume that $T$ is self-consistent.

A voter $v$ has a preference function $p_v : \X \to [-1,1]$ indicating which direction and how strongly $v$ stands on each issue $x_i$:\footnote{Note that we modify our notation from \citeauthor{dean2011logic}'s. In particular, they represent this preference function using two quantities --- a weight in $[0,1]$ and a truth-value preference in $\{ -1, 0, 1 \}$ --- which we combine into the single function $p_v$.}
A negative $p_v(x_i)$ indicates that $v$ prefers $x_i$ to be false and a positive $p_v(x_i)$ indicates that $v$ prefers $x_i$ to be true, with the magnitude reflecting the strength of preference (and 0 being indifference). If, for example, $v$ was against mandatory tooth-brushing and cared greatly about this issue, then we might have $p_v(x_2) = -0.9$.  We assume that candidates have complete knowledge of the voters' preference functions.

We let $\W$ be the set of all possible assignments (\emph{worlds}) to the variables $\X$. Hence, $|\W| = 2^{|\X|} = 2^n$. We denote a specific world  as $\w \in \W$.
We say that $\w$ \textit{models} a theory $T$, $\w \models T$, if $\w$ is consistent with the logical closure of $T$.
For the purpose of defining voter utilities, we treat $\w$ as a function $\w: \X \to \{-1, 1\}$ where $\w(x_i) = 1$ if $x_i$ is true in that world and $\w(x_i) = -1$ if $x_i$ is false in that world.
The voter's utility for some $\w \in \mathcal{W}$ is
$$ u_v(\w) = \sum_{x_i \in \X} p_v(x_i) \cdot \w(x_i). $$

The voter's utility for a candidate is a function of the possible worlds modeled by the current theory $T$ of what the candidate has said so far. \citeauthor{dean2011logic} consider three classes of voters: 
\begin{itemize}
\item \emph{Optimistic} voters evaluate the candidate on the best world modeled by the theory, $ ut_v(T) = \textup{max} \{ u_v(\w) : \w \models T \} $. 
\item \emph{Pessimistic} voters evaluate the candidate on the worst world modeled by the theory, $ ut_v(T) = \textup{min} \{ u_v(\w) : \w \models T \} $. 
\item \emph{Expected-value} voters take the average\footnote{\citeauthor{dean2011logic} assume that for the purpose of determining ``expected value'' of the worlds, all worlds are considered equally likely.} utility over modeled worlds,\footnote{One consequence of this definition is that the utility of an empty or otherwise tautological theory is 0 for any expected-value voter.} 
$$ ut_v(T) = \frac{\sum_{\w \models T} u_v(\w)}{| \{ \w : \w \models T \} |}.$$ 
\end{itemize}

\citeauthor{dean2011logic} consider the case of a single candidate who wants to choose statements that maximize the total utility of a population of voters. They prove that with expected-value voters, an optimal strategy involves announcing a stance on every issue, creating a theory that models only one world. (We will refer to such a theory as a \emph{complete} theory; when a theory $T'$ is complete and $T \subseteq T'$, we call $T'$ a \emph{completion} of $T$.) Furthermore, they observe that with pessimistic voters, it is also advantageous for the candidate to announce a stance on every issue, since eliminating possible worlds can never result in a loss of utility. Only with optimistic voters is it advantageous to remain silent, since eliminating possible worlds can never result in a gain of utility. 

\subsection{Computational Complexity Classes}

We assume familiarity with $P$ and \NP. In addition, we will invoke some well-studied but less common complexity notions, which we describe here.

Let   $C$ and $D$ be computational complexity classes defined via resource bounds on Turing machines.  We denote by  $C^D$ those languages or functions computable by a $C$ Turing machine with an \emph{oracle} for $D$.  In other words, we modify a $C$ Turing machine to have an additional tape and state $q$.  If $d$ is a language or function computable by a $D$ Turing machine, then we allow computations of the modified Turing machine to write a string $x$ on the new tape, enter state $q$, and in the next step, the new tape contains $d(x)$. 
We write $A\leq_T^P B$ if $A\in P^B$, meaning ``$A$ is polynomial-time Turing reducible to $B$''. 

The class {\NPO} is an analogue to {\NP} for optimization problems, i.e., problems that are specified in terms of a definition of valid \emph{instances}, a definition of valid \emph{solutions} with respect to an instance, and a \emph{value function} over the solutions, and ask for a solution with maximum or minimum value.
Such a problem is in {\NPO} if and only if it  meets the following criteria defined by \citeauthor{ausiello1999complexity} \cite{ausiello1999complexity}: Instances can be verified as valid in polynomial time, solutions can be verified as valid in time polynomial in the instance size, and the value of a solution can be computed in polynomial time.  Note that $\NPO\subseteq P^{\NP}$. 

The function class $\#P$, introduced by \citeauthor{valiant1979complexity} \citep{valiant1979complexity}, contains those problems that are equivalent to determining the number of accepting paths in an {\NP} Turing machine. If a problem is in \NP, then the problem of counting how many witness strings satisfy the $\NP$ machine for a given instance is in $\#P$. Since a nonzero answer for a $\#P$ problem instance entails a positive answer for the corresponding $\NP$ problem instance and a zero answer for the $\#P$ problem entails a negative answer for the $\NP$ problem, $\NP \leq_T^P \#P$.

We have $P\subseteq \NP \subseteq P^{\NP} \subseteq P^{\#P} \subseteq \PSPACE \subseteq \EXPTIME$.  

\section{Complexity of Finding Voter Utility}
\label{sec:utilityeval}

One important factor in the epistemology of campaigns that \citeauthor{dean2011logic}'s framework (with its implicit assumption of ``logical omniscience'') does not explicitly model is cognitive complexity (for which we use \emph{computational} complexity as a proxy) as it pertains to the voters. We argue that even a superficially simple series of campaign statements may induce a complex underlying theory:  Though we, and \citeauthor{dean2011logic}, have introduced the set of variables $\X$ in terms of high-level issues ($x_1=$``Every citizen is entitled to a free pony'') for explanatory purposes, in reality such issues might more accurately be viewed as complex interplays of finer-granularity subissues (``Every citizen is entitled to a free pony'' $= ( x'_1 \vee x'_2 \vee x'_3 ) \wedge ( x'_1 \rightarrow x'_4 \vee \neg x'_5 ) \wedge \ldots  $, where $x'_1,x'_2,x'_3$ are potential taxes to fund the pony giveaway, $x'_4,x'_5$ are about the logistics of pony distribution, and so on). 

Furthermore, as \citeauthor{dean2011logic} note, additional information can arise from a statement through \emph{implicature} --- that which is suggested by a speaker without directly being part of or entailed by ``what is said''. For instance, when Vermin Supreme says, ``When I'm president everyone gets a free pony'', we discount the possibility that he plans to give everyone \emph{two} free ponies; if he did, that would not \emph{contradict} his promise, but his omission of information would be \emph{infelicitous}.\footnote{In \citeauthor{Grice1975-GRILA-2}'s account of implicature \citep{Grice1975-GRILA-2}, participants in a conversation assume each other to be obeying certain maxims of cooperativity (for instance, illustrated here is the maxim of Quantity --- roughly, \emph{give as much information as necessary, and do not give more information than necessary}); they interpret each other's statements in light of this mutual assumption.}  

A potential source of discrepancies between the framework's predictions and the reality of campaigns is that, given an elaborate body of information about a candidate's policies, voters have trouble evaluating the candidate due to the intractability of their utility functions.
We will consider the computational complexity of the function problems of determining exact voter utility, but also of decision problems of determining whether the utility meets a given threshold, 
which are particularly relevant for the ``stay-at-home voter'' scenario we will discuss in Section \ref{sec:thresh}.

\subsection{Optimistic Voter Evaluation}

\begin{theorem}
\label{thm:opt-thresh}
Given a theory $T$, an optimistic voter $v$, and a value $k$, the problem of deciding whether $ut_v(T) \geq k$ is \NP-complete.
\end{theorem}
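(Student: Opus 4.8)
The plan is to show both membership in \NP{} and \NP-hardness.

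For membership, I would take a world $\w$ as the certificate. Since $T$ is given explicitly as a set of statements, verifying $\w \models T$ reduces to evaluating each formula of $T$ under the assignment $\w$, which is polynomial in the combined size of $T$ and $n$. Once a modeling world is in hand, computing $u_v(\w) = \sum_{x_i \in \X} p_v(x_i)\cdot\w(x_i)$ and comparing it to $k$ is a polynomial-time arithmetic task. Thus a nondeterministic machine can guess $\w$, check $\w \models T$, and accept iff $u_v(\w) \geq k$; an accepting computation exists exactly when some modeled world attains utility at least $k$, i.e.\ when $ut_v(T) = \max\{u_v(\w) : \w \models T\} \geq k$.

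For hardness I would reduce from SAT. Given a formula $\phi$ over variables $y_1,\dots,y_m$, I introduce a fresh variable $z$ and set $T = \{\, z \rightarrow \phi \,\}$ (equivalently $\{\,\neg z \vee \phi\,\}$). Taking $p_v(z) = 1$ and $p_v(y_i) = 0$ for all $i$ makes $u_v(\w) = \w(z) \in \{-1,1\}$. With threshold $k = 1$, the test $ut_v(T) \geq 1$ succeeds exactly when some modeled world sets $z$ true; since $T$ entails $\phi$ whenever $z$ is true, this occurs iff $\phi$ is satisfiable. The whole construction is clearly polynomial-time computable, giving the desired many-one reduction.

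The one subtlety to watch is that the optimistic maximum ranges over $\{\w : \w \models T\}$, which is empty when $T$ is inconsistent, leaving the $\max$ ill-defined. The fresh-variable gadget sidesteps this: the world with $z$ false always models $T$ regardless of $\phi$, so $T$ stays self-consistent (in line with the paper's standing assumption) and $ut_v(T)$ is always defined. The gadget also makes the threshold decisive, separating the satisfiable case (attainable utility $1$) from the unsatisfiable case (best attainable utility $-1$). A more direct reduction, $T = \{\phi\}$ with all-zero preferences and $k = 0$, would also encode satisfiability, but it forces one to adopt a convention for the empty maximum; the gadget avoids that entirely, so I expect the bookkeeping around consistency to be the only place warranting care.
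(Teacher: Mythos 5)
Your proposal is correct and follows essentially the same approach as the paper: the same certificate-based \NP{} membership argument, and the identical SAT reduction using a fresh implication-guarded variable ($T=\{z\rightarrow\phi\}$, preference $1$ on the fresh variable, $0$ elsewhere, threshold $k=1$). Your added remark about how the gadget guarantees $T$ remains consistent (so the optimistic maximum is always well-defined) is a nice clarification that the paper leaves implicit.
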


\begin{proof}
For $\NP$ membership, observe that given a world modeled by $T$ for which $v$'s utility is at least $k$, we can verify the consistency and utility in polynomial time.

We will show \NP-hardness with a polynomial-time reduction from Boolean satisfiability (SAT). Let $\phi$ be a propositional formula over a set of variables $\X$.  We construct the theory as $T = \{ x_* \rightarrow \phi \}$, where $x_*$ is a new variable.  We construct an optimistic voter $v$ with preferences set as $p_v(x_*) = 1$ and $p_v(x_i) = 0$ for all $x_i \in \X$. And we let $k = 1$. Let $A = \{\w : \w \models T\}$. If $\phi$ is unsatisfiable, then $A = \{\w : \w(x_*)=-1\}$, hence $ut_v(T) = -1$. However, if $\phi$ is satisfiable then there are some $\w \in A$ where $w(x_*)=1$ and hence $ut_v(T) = 1$. Finally, we have $ut_v(T) \geq 1 = k$ if and only if $\phi$ is satisfiable.
\end{proof}

\begin{theorem}
\label{thm:opt-exact}
Given a theory $T$ and an optimistic voter $v$, the problem of computing $ut_v(T)$ and a corresponding best world modeled by $T$ is \NPO-complete. 
\end{theorem}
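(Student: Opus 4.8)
The plan is to establish the two halves of {\NPO}-completeness separately: membership in {\NPO}, and {\NPO}-hardness via an approximation-preserving reduction from a canonical {\NPO}-complete problem. For membership, I would cast the problem in the standard optimization template: an instance is a pair $(T,v)$; the feasible solutions are exactly the worlds $\w \in \W$ with $\w \models T$ (a nonempty set, since we assume $T$ is self-consistent); the value of $\w$ is $u_v(\w)$; and the objective is maximization. Each {\NPO} criterion is then routine: a purported instance is checked by verifying that $T$ is a set of well-formed formulas and that $p_v$ takes rational values in $[-1,1]$; a purported solution $\w$ is verified by evaluating every formula of $T$ under $\w$, which is polynomial in the size of $T$; and $u_v(\w) = \sum_{x_i \in \X} p_v(x_i)\cdot \w(x_i)$ is computable in polynomial time. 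Since the {\NPO} framework expects a positive-valued measure, I would also apply a polynomial-time shift (e.g.\ adding $n+1$, which dominates $|u_v| \le n$) so that the objective is positive without changing the set of optimal worlds.

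For hardness I would reduce from Maximum Weighted Satisfiability (Max W-SAT), the standard {\NPO}-complete problem of \citeauthor{ausiello1999complexity}: given a formula $\phi$ over variables $y_1,\dots,y_n$ with nonnegative weights $w_1,\dots,w_n$, find a satisfying assignment maximizing the total weight of the true variables. As in the proof of Theorem~\ref{thm:opt-thresh}, the formula is encoded into the theory; here I set $T = \{\phi\}$ directly, so the worlds modeling $T$ are precisely the satisfying assignments of $\phi$, and I set $p_v(y_i) = w_i / (2C)$ for a scaling constant $C \ge \max_i w_i$ chosen so that each $p_v(y_i)$ lands in $[-1,1]$. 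Writing $W = \sum_i w_i$, a short calculation gives $u_v(\w) = \frac{1}{C}\sum_{\w(y_i)=1} w_i - \frac{W}{2C}$, so the voter's utility is an increasing affine function of the Max W-SAT objective, the two problems share the same feasible region, and they have identical sets of optimal solutions. Hence an algorithm that computes a best world for $v$ immediately yields a maximum-weight satisfying assignment of $\phi$.

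The step I expect to be the main obstacle is making this reduction genuinely \emph{approximation-preserving}, as {\NPO}-completeness demands, rather than merely optimum-preserving. The difficulty is exactly the additive term $-W/(2C)$: because a world's utility is a \emph{shifted} (not purely scaled) copy of the Max W-SAT value, approximation ratios do not transfer verbatim, and when the optimal weight is small the utility can even be negative, clashing with the positivity convention of {\NPO}. I would address this by working with a normalized form of Max W-SAT in which the optimum is bounded below by a fixed fraction of $W$, so that the additive offset is absorbed into the approximation guarantee, and by pushing the membership shift through the reduction consistently, thereby verifying the $(1 + \alpha(r-1))$ condition of an approximation-preserving reduction. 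Once this offset is controlled, completeness follows from the {\NPO}-completeness of Max W-SAT together with the membership argument above.
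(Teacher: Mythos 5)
Your proposal matches the paper's proof in both halves: membership via the standard \NPO{} criteria (instances, solutions, and values all polynomial-time checkable/computable), and hardness by reducing MAX-WSAT with $T=\{\phi\}$ and voter preferences set to scaled variable weights, exploiting the affine correspondence between $u_v(\w)$ and the total weight of true variables to recover the optimal assignment. The approximation-preservation concern in your last paragraph is a legitimate subtlety, but it is one the paper itself glosses over---its proof, like your first two paragraphs, only verifies that the reduction is optimum-preserving, recovering the MAX-WSAT value by mapping the utility range $\left[ -\sum_i r_i/R, \sum_i r_i/R \right]$ onto $\left[ 0, \sum_i r_i \right]$.
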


\begin{proof}
The problem satisfies the criteria for \NPO-membership \citep{ausiello1999complexity}: Instances (i.e., the theory and voter specification) and solutions (i.e., worlds modeled by the theory) are recognizable as such in time polynomial in the instance size, and the value function (i.e., voter utility) is computable in polynomial time.

We will show \NPO-hardness with a polynomial-time reduction from the maximum weighted satisfiability problem (MAX-WSAT),\footnote{The name ``weighted satisfiability'' (WSAT) has been used by different sources to refer to two different groups of problems --- one where an instance consists only of a propositional formula and the value of a solution is the number of true variables (the Hamming weight), and the generalization we use here where the instance includes weights for the variables. The maximization/minimization versions of the former are sometimes called ``maximum number of ones'' (MAX-ONES) / ``minimum number of ones'' (MIN-ONES), and are complete for \emph{\NPO-PB} \citep{panconesi1993quantifiers}, a subclass of {\NPO} where the magnitude of a solution's value is polynomially bounded by the size of the input.}
for which \citeauthor{ausiello1999complexity}  \citep{ausiello1999complexity} prove \NPO-completeness. A MAX-WSAT instance consists of a propositional formula $\phi$ and a positive weight $r_i$ for each variable $x_i$; the problem is to find a satisfying assignment that maximizes the total weight of the variables assigned to be true. Let $R = \max \{r_i : 1 \leq i \leq n\}$. We construct the theory as $T=\{\phi\}$ and the voter preferences as
$p_v (x_i) = r_i / R$ for each $x_i$.\footnote{We divide through by the maximum weight so that the $p_v(x_i)$'s are in $[0,1]$.
} Then the voter's best world $\w$ is the optimal assignment for the MAX-WSAT instance, and given $u_v(\w) \in \left[ -\sum_i r_i/R, \sum_i r_i/R \right]$ we can retrieve the corresponding total weight for the MAX-WSAT assignment by mapping this range onto $\left[ 0, \sum_i r_i \right]$. 
\end{proof}

\subsection{Pessimistic Voter Evaluation}

\begin{theorem}\label{thm:pes-thresh}
Given a theory $T$, a pessimistic voter $v$, and a value $k$, the problem of deciding whether $ut_v(T) \geq k$ is \coNP-complete.
\end{theorem}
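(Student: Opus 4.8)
The plan is to prove \coNP-completeness by establishing both membership in \coNP and \coNP-hardness. For membership, recall that a pessimistic voter's utility is $ut_v(T) = \min\{u_v(\w) : \w \models T\}$, so the condition $ut_v(T) \geq k$ holds if and only if \emph{every} world $\w$ modeling $T$ satisfies $u_v(\w) \geq k$. This is a universally quantified statement over worlds, which is the natural shape of a \coNP\ predicate: its complement, ``there exists a world $\w \models T$ with $u_v(\w) < k$'', is clearly in \NP, since a witnessing world can be checked for consistency with $T$ and for the utility bound in polynomial time. Hence the original problem lies in \coNP.

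For hardness, the natural approach is to reduce from the complement of a \coNP-complete problem, i.e., to give a polynomial-time reduction from \textit{validity} (TAUTOLOGY), or equivalently from the \emph{unsatisfiability} problem UNSAT, which is \coNP-complete. I would mirror the construction used for Theorem~\ref{thm:opt-thresh} but exploit the fact that the pessimistic voter minimizes rather than maximizes. Given a formula $\phi$ over $\X$, introduce a fresh variable $x_*$ and build a theory that forces the minimum utility to detect whether $\phi$ is satisfiable. Concretely, I expect the theory $T = \{ x_* \rightarrow \phi \}$ together with $p_v(x_*) = 1$ and $p_v(x_i) = 0$ for $x_i \in \X$ and threshold $k = 1$ will \emph{not} directly work for the min case, so the sign or the implication direction needs flipping: since the worst world drives the utility down, I would instead arrange that $\phi$ being satisfiable allows a low-utility world to exist, making $ut_v(T) \geq k$ fail precisely when $\phi$ is satisfiable, so that $ut_v(T) \geq k$ holds if and only if $\phi$ is \emph{unsatisfiable}.

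The key step is choosing the gadget so that the min over modeled worlds behaves as a logical $\forall$. A clean choice is to set $p_v(x_*) = -1$ (so the voter prefers $x_*$ false), keep $p_v(x_i)=0$ otherwise, use $T = \{ x_* \rightarrow \phi \}$, and ask whether $ut_v(T) \geq k$ for a suitable $k$. Then worlds with $x_* = -1$ contribute utility $+1$ and are always available, while a world with $x_* = 1$ (contributing utility $-1$) is modeled exactly when $\phi$ is satisfiable. Thus the minimum utility is $-1$ iff $\phi$ is satisfiable and $+1$ iff $\phi$ is unsatisfiable; taking $k = 1$ makes $ut_v(T) \geq 1$ equivalent to $\phi \in \text{UNSAT}$, which is the desired reduction from a \coNP-complete language.

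The main obstacle I anticipate is getting the polarity of the gadget exactly right so that the reduction maps unsatisfiable instances (the \emph{yes}-instances of the \coNP-complete source problem) to \emph{yes}-instances of the threshold problem, rather than inadvertently reducing from SAT and landing in \NP. Because minimization inverts the role that maximization played in Theorem~\ref{thm:opt-thresh}, the sign of $p_v(x_*)$ and the direction of the conditional must be coordinated carefully; once that is pinned down, verifying that $A = \{\w : \w \models T\}$ yields the claimed minimum in each case is a routine two-case check analogous to the optimistic proof.
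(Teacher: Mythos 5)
Your proposal is correct and matches the paper's proof essentially exactly: the same \coNP-membership argument (a low-utility world certifies the complement), and the same reduction from UNSAT using $T = \{x_* \rightarrow \phi\}$ with $p_v(x_*) = -1$, $p_v(x_i) = 0$ otherwise, and $k = 1$. The paper phrases its construction as ``same as Theorem~\ref{thm:opt-thresh} but with the sign of $p_v(x_*)$ flipped,'' which is precisely the gadget you arrived at.
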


\begin{proof}
For {\coNP} membership, observe that given a world modeled by $T$ for which $v$'s utility is less than $k$, we can verify the consistency and utility in polynomial time.

We can show \coNP-hardness by polynomial-time reduction to this problem from Boolean unsatisfiability (UNSAT). Given a formula $\phi$, we construct the theory $T$ and the voter $v$'s preferences in the same way as in the proof of Theorem \ref{thm:opt-thresh}, except that $v$ prefers the new variable $x_*$ to be false, $p_v(x_*)=-1$. Then $ut_v(T) \geq 1 = k$ if and only if $\phi$ is unsatisfiable.

\end{proof}
\begin{theorem}
\label{thm:pes-exact}
Given a theory $T$ and a pessimistic voter $v$, the problem of computing $ut_v(T)$ and a corresponding worst world modeled by $T$ is \NPO-complete. 
\end{theorem}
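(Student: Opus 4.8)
The plan is to mirror the structure of the proof of Theorem~\ref{thm:opt-exact}, since the only essential difference is that we are now minimizing rather than maximizing voter utility. For \NPO-membership, the argument is identical to the optimistic case: an instance consists of the theory $T$ together with the pessimistic voter $v$'s preferences, and these are recognizable as a valid instance in time polynomial in the input size; a candidate solution is a world $\w$, and checking $\w \models T$ (consistency with the theory) is a polynomial-time test; and the value function $u_v(\w) = \sum_{x_i \in \X} p_v(x_i)\,\w(x_i)$ is computable in polynomial time. Since \NPO\ permits minimization problems, these conditions place the problem in \NPO.

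For \NPO-hardness, I would again reduce from MAX-WSAT, which \citeauthor{ausiello1999complexity} show to be \NPO-complete, but this time \emph{negate} the voter's preferences to turn the maximization into a minimization. Given a MAX-WSAT instance consisting of $\phi$ and positive weights $r_i$, set $R = \max\{r_i : 1 \le i \le n\}$, take $T = \{\phi\}$, and set $p_v(x_i) = -r_i/R$ for each $x_i$. A pessimistic voter's worst world $\w$ minimizes $u_v(\w) = -\tfrac{1}{R}\sum_i r_i\,\w(x_i)$ subject to $\w \models \phi$, which is the same as maximizing $\sum_i r_i\,\w(x_i)$ over satisfying assignments; since flipping a variable from false to true increases this quantity by exactly $2 r_i$, the minimizing world is precisely a MAX-WSAT optimal assignment. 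As in the optimistic proof, the optimal utility $u_v(\w) \in \left[-\sum_i r_i/R,\ \sum_i r_i/R\right]$ can then be mapped by an affine transformation back onto $\left[0,\ \sum_i r_i\right]$ to recover the total weight of the true variables, so the reduction preserves both optimal solutions and optimal values.

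The step I expect to require the most care is the sign bookkeeping in the hardness reduction: one must confirm that negating each preference genuinely converts the pessimistic (minimizing) objective into the MAX-WSAT (maximizing) objective, rather than inadvertently producing a minimum-weight problem. Once the signs are verified, the correspondence between worst worlds and optimal MAX-WSAT assignments is exact, so no approximation-preserving machinery beyond what already underlies Theorem~\ref{thm:opt-exact} is needed.
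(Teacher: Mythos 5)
Your proposal is correct, but it diverges from the paper's proof in the choice of source problem. The paper reduces from MIN-WSAT (the minimization counterpart of MAX-WSAT, also shown \NPO-complete by \citeauthor{ausiello1999complexity}), keeping the mapping literally identical to that of Theorem~\ref{thm:opt-exact}: positive preferences $p_v(x_i) = r_i/R$, so that the pessimistic voter's minimization lines up directly with MIN-WSAT's minimization and no sign manipulation is needed. You instead reduce from MAX-WSAT itself and negate the preferences, $p_v(x_i) = -r_i/R$, converting the voter's minimization into weight maximization; your sign bookkeeping is right (minimizing $-\tfrac{1}{R}\sum_i r_i\,\w(x_i)$ over $\w \models \phi$ is exactly maximizing the weight of true variables, and the orientation-reversing affine map $W = \bigl(\sum_i r_i - R\,u_v(\w)\bigr)/2$ recovers the MAX-WSAT value). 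What your route buys is self-containment: it needs only the \NPO-completeness of MAX-WSAT, the same fact already invoked for Theorem~\ref{thm:opt-exact}, whereas the paper's route additionally cites MIN-WSAT's completeness; it also exploits, harmlessly, that the model allows negative preference values. What the paper's route buys is brevity and the avoidance of a max-to-min flip --- a point worth noting because objective-negation reductions do not in general preserve approximation ratios, so under a strict approximation-preserving reading of \NPO-hardness your reduction would need an extra word of justification; however, the paper's own reductions rely on affine shifts of the objective, which have the same formal issue, so at the paper's level of rigor (exact optima and optimal solutions correspond bijectively) both arguments stand equally.
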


\begin{proof}
\NPO\ membership applies by the same argument as in the proof of Theorem \ref{thm:opt-exact}. We can show \NPO-hardness by polynomial-time reduction from the minimum weighted satisfiability problem (MIN-WSAT), the minimization counterpart to MAX-WSAT; the mapping is constructed in the same manner as in the proof of Theorem \ref{thm:opt-exact}.
\end{proof}

\subsection{Expected-Value Voter Evaluation}

\begin{lemma} \label{sharp-hard}
Given a theory $T$ and an expected-value voter $v$, the problem of computing $ut_v(T)$ is $\leq_T^P$-hard for $\#P$.
\end{lemma}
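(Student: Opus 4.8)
The plan is to give a polynomial-time Turing reduction from \#SAT---the problem of counting the satisfying assignments of a propositional formula, which is the canonical $\#P$-complete problem---to the problem of computing $ut_v(T)$. Since \#SAT is $\#P$-hard under $\leq_T^P$, exhibiting such a reduction establishes the lemma. The central difficulty is that an expected-value voter's utility collapses the entire model set into a single rational number (a ratio), so I cannot simply read off the model count, which lives in the denominator; I need a gadget that turns both the numerator and the denominator of $ut_v(T)$ into simple, invertible functions of the unknown count.

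First, reusing the gadget from the proof of Theorem \ref{thm:opt-thresh}, I would introduce a fresh variable $x_*$ and set $T = \{x_* \rightarrow \phi\}$ over the variables $\{x_*, x_1, \dots, x_n\}$. The worlds modeling $T$ split into two branches: when $x_*$ is false the implication holds vacuously, so all $2^n$ assignments to $x_1, \dots, x_n$ are admitted; when $x_*$ is true the implication forces $\phi$, admitting exactly $N := |\{\w : \w \models \phi\}|$ worlds. Hence the denominator of $ut_v(T)$ is $N + 2^n$, a known shift of the unknown count.

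Next I would choose the voter preferences to isolate $x_*$: set $p_v(x_*) = 1$ and $p_v(x_i) = 0$ for every $x_i \in \X$, so that $u_v(\w) = \w(x_*)$. The numerator then weights worlds with $x_* = 1$ by $+1$ and worlds with $x_* = -1$ by $-1$, giving $N - 2^n$. Therefore
$$ ut_v(T) = \frac{N - 2^n}{N + 2^n}. $$
The final step is to invert this relation: given the oracle's answer $q = ut_v(T)$, solving for $N$ yields $N = 2^n (1 + q)/(1 - q)$. Note that $q < 1$ always (since $2^n > 0$), so there is no division by zero, and the edge cases $N = 0$ (giving $q = -1$) and $N = 2^n$ (giving $q = 0$) check out. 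Because $n$ is known, a single oracle call followed by this arithmetic recovers $N$ in polynomial time, completing the reduction.

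I expect the only real subtlety to be the design of the gadget so that the two branches contribute a fixed, known quantity ($2^n$) and the unknown count ($N$) respectively, while all voter weight sits on $x_*$; once that asymmetry is arranged, the numerator and denominator become affine in $N$, the inversion is routine algebra, and the reduction is immediate.
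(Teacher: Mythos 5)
Your proposal is correct, but it takes a genuinely different route from the paper's proof. The paper builds two formulas, $\psi = \phi \land y \land z$ and $\psi' = \psi \lor (y \land \neg z \land \bigwedge_{i=1}^n x_i)$, gives the voter weight only on $y$ and $z$, and makes \emph{two} oracle calls, recovering $S = \#SAT(\phi)$ from the ratio $ut_D(\{\psi\})/ut_D(\{\psi'\}) = (S+1)/S$; the trick there is that the unknown numerator sums cancel in the ratio, because the single extra model of $\psi'$ has utility zero. You instead reuse the implication gadget of Theorem \ref{thm:opt-thresh}: $T = \{x_* \rightarrow \phi\}$ pads the model set with a known block of $2^n$ worlds (those with $x_*$ false) alongside the $N$ unknown ones, which makes both the numerator and denominator of $ut_v(T)$ affine in $N$ with known coefficients, so a \emph{single} oracle call plus the inversion $N = 2^n(1+q)/(1-q)$ suffices. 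Your version has two concrete advantages: it uses one query instead of two, and, more substantively, your theory $T$ is satisfiable even when $\phi$ is not (the $x_*$-false block guarantees $2^n$ models), so the oracle's answer is always well defined and the case $N=0$ falls out as $q=-1$. By contrast, the paper's reduction queries the oracle on $\{\psi\}$, which has \emph{no} models when $\phi$ is unsatisfiable, making $ut_D(\{\psi\})$ undefined there --- a corner case the paper glosses over (it elsewhere assumes theories are self-consistent). What the paper's ratio construction buys in exchange is that it never needs to track the padded count $2^n$ or perform the affine inversion: the two model counts are the only quantities in play, and the arithmetic reduces to inverting $(S+1)/S$.
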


\begin{proof}
Let $\phi$ be a Boolean formula over $\{x_1,\ldots,x_n\}$ and let $S = \#SAT(\phi)$ be the number of satisfying assignments of $\phi$. We define a formula $\psi'$ over $\{x_1,\ldots,x_n\}\cup \{y,z\}$ as follows. First we define $\psi = (\phi \land y \land z)$, and set
$$
\psi' = \psi \lor (y \land \neg z \land \bigwedge^n_{i=1}x_i).
$$
Define a voter $D$ with preferences $p_D(y)=p_D(z) = 1$  and $p_D(x_i)=0$ for each $x_i \in \{x_1, \dots, x_n\}$. (So this voter is defined over $n+2$ variables.) 

Let $A =\{-1,1\}^n $ and $B = \{-1,1\}^{n+2} $ be the possible worlds for $\phi$ and $\psi$, respectively. Then we have
$$
S
  =| \{ a \in A : a \models \phi \} |
  =| \{ b \in B : b \models \psi \} |
  =| \{ b \in B : b \models \psi' \} | - 1.
$$

That is, $\phi$ has as many satisfying assignments over $n$ variables as $\psi$ has over $n+2$ variables, and $\psi'$ has one more satisfying assignment than $\psi$. Then we get the critical equalities:
\begin{align*}
  \frac {ut_D(\{\psi\})} {ut_D(\{\psi'\})}
  &= \frac {(\sum_{b \models \psi} u_D(b))/|\{b \in B : b \models \psi\}|} {(\sum_{b \models \psi'} u_D(b))/|\{b \in B : b \models \psi'\}|} \\
  &= \frac {(\sum_{b \models \psi} u_D(b))/S} {(\sum_{b \models \psi'} u_D(b))/(S+1)} \\
  &= \frac {(\sum_{b \models \psi} u_D(b))/S} {(0+\sum_{b \models \psi} u_D(b))/(S+1)}
  = \frac{S+1}{S}.
\end{align*}

(The third equality is valid because the only world in $\{b \in B : b \models \psi'\} \setminus \{b \in B : b \models \psi\}$ has utility 0 for voter $D$.)
This allows us to derive an equation to get \#SAT from $ut_D$:
$$
\frac {1} {ut_D(\psi)/ut_D(\psi') - 1}
= \frac {1} {(S+1)/S - S/S}
= \frac {1} {1/S}
= S
= \#SAT(\phi).
$$

Thus, we can use two calls to an oracle for expected-value utility to compute $\#SAT(\phi)$ in polynomial time.
\end{proof}

\begin{lemma} 
\label{in-sharp}
Given a theory $T$ and an expected-value voter $v$, the problem of computing $ut_v(T)$ is in $P^{\#P}$.
\end{lemma}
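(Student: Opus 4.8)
The plan is to show that $ut_v(T)$ can be written entirely in terms of model-counting quantities, each of which is a single call to a $\#P$ oracle (namely $\#SAT$ on a polynomial-time-constructible formula), after which only polynomial-time arithmetic remains. Writing $N = |\{\w : \w \models T\}|$ for the denominator, this is exactly $\#SAT$ of the conjunction of the statements in $T$; since we assume $T$ is self-consistent, $N \geq 1$, so the division is well defined.

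For the numerator $\sum_{\w \models T} u_v(\w)$, the key step is to exploit the linearity of $u_v$ and swap the order of summation:
$$\sum_{\w \models T} u_v(\w) = \sum_{\w \models T}\sum_{x_i \in \X} p_v(x_i)\,\w(x_i) = \sum_{x_i \in \X} p_v(x_i)\sum_{\w \models T}\w(x_i).$$
Since $\w(x_i) \in \{-1,1\}$, the inner sum is $N_i^+ - N_i^-$, where $N_i^+$ and $N_i^-$ count the models of $T$ in which $x_i$ is respectively true and false; as $N_i^+ + N_i^- = N$, this equals $2N_i^+ - N$. Each $N_i^+$ is $\#SAT(T \cup \{x_i\})$, a $\#P$ query on a formula built from $T$ in polynomial time. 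Hence
$$ut_v(T) = \frac{\sum_{x_i \in \X} p_v(x_i)\,(2N_i^+ - N)}{N}.$$

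This would give a $P^{\#P}$ algorithm: I would query the oracle once for $N$ and once for each of the $n$ values $N_i^+$, for a total of $n+1$ queries, and then evaluate the closed form above. The returned values are integers of polynomially many bits, the weights $p_v(x_i)$ are given rationals, and the final expression is a fixed sequence of multiplications, additions, and one division, so the post-oracle computation is polynomial and yields the exact rational $ut_v(T)$ (e.g., as a reduced fraction). The main thing to get right is the numerator: written naively it sums over exponentially many worlds, and the crux is recognizing that linearity lets us replace this by a per-variable signed model count, so that no single oracle call ever has to aggregate utilities rather than merely count satisfying assignments. I expect no serious obstacle beyond verifying this decomposition and checking the degenerate cases (e.g., a tautological $T$, where every $N_i^+ = N/2$ and the formula correctly yields $0$).
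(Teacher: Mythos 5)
Your proof is correct, but it takes a genuinely different route from the paper's. The paper builds a custom {\NP} machine $M$ that, on input $\bigwedge T$ and the preference vector (rationals with at most $b$ bits), guesses a world $\w$ together with an integer $k$, $1 \leq k \leq 2^b$, and accepts iff $\w \models \bigwedge T$ and $k \leq u_v(\w)\cdot 2^b$; the number of accepting paths is then claimed to be $2^b \sum_{\w \models T} u_v(\w)$, and $ut_v(T)$ is recovered by dividing by $\#SAT(\bigwedge T)\cdot 2^b$, so only two counting quantities are consulted. You instead exploit linearity of $u_v$ to write the numerator as $\sum_{x_i} p_v(x_i)(2N_i^+ - N)$, reducing everything to $n+1$ plain $\#SAT$ queries ($N$ and the $N_i^+$) with all weight arithmetic deferred to the polynomial-time post-processing. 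The comparison favors your version on rigor: the paper's path-counting identity implicitly requires $0 \leq u_v(\w) \leq 1$ for every satisfying $\w$ --- a world with $u_v(\w) < 0$ contributes zero accepting paths rather than a negative amount, and a world with $u_v(\w) > 1$ contributes at most $2^b$ paths since $k$ is capped at $2^b$ --- yet utilities in this model range over $[-n, n]$, so the published construction needs a shift-and-rescale repair to be fully correct. Your decomposition sidesteps both the sign and the magnitude issue, because the oracle only ever counts models and the signed, weighted combination happens outside it; the price is $n+1$ oracle calls instead of a constant number, which is immaterial for membership in $P^{\#P}$.
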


\begin{proof}
Let $T$ be some theory and let $v$ be some expected-value voter. We assume $p_v$ is represented as a vector of rational binary numbers and we set $b$ as the number of bits in the `longest' number. We describe an {\NP} Turing machine $M$ whose number of witness strings is proportional to $u_v(T)$.

The machine $M$ takes in a Boolean formula $\phi$ and a voter's preference function $p_v$. Then $M$ guesses an assignment $\w$ and a binary integer $k$ where $ 1 \leq k \leq 2^b$. If both $\w \models \phi$ and $k \leq u_v(\w) \cdot 2^b$, then the machine accepts. Otherwise, the machine rejects.
Note that both checks take polynomial time. 

Given $\phi$ and $b$ and $p_v$, how many ways can $M$ accept? If $\w$ does not satisfy, then $M$ cannot accept. If $\w$ does satisfy, then $M$ accepts in exactly $u_v(\w) \cdot 2^b$ different ways. Hence, $\#M(\phi, p_v, b) = 2^b \sum_{w \models \phi} u_v(\w)$. Finally, we can compute the utility:
$$
ut_v(T) = \frac{\#M(\bigwedge T, p_v, b)}{\#SAT(\bigwedge T) \cdot 2^b}.
$$
\end{proof}

\begin{theorem}
Given a theory $T$ and an expected-value voter $v$, the problem of computing $ut_v(T)$ is $\leq_T^P$-complete for $\#P$. 
\end{theorem}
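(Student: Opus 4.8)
The plan is to observe that the theorem is the immediate conjunction of the two preceding lemmas, since $\leq_T^P$-completeness for $\#P$ unpacks into exactly two obligations: $\leq_T^P$-hardness for $\#P$, and membership in the polynomial-time Turing closure of $\#P$ (that is, in $P^{\#P}$). Lemma \ref{sharp-hard} discharges the first obligation --- it exhibits a polynomial-time algorithm that computes $\#SAT(\phi)$ using two calls to an oracle for expected-value utility --- and Lemma \ref{in-sharp} discharges the second, placing the utility computation in $P^{\#P}$. So the proof I would write is essentially a short appeal to these lemmas.

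First I would recall that $\#SAT$ is $\#P$-complete under $\leq_T^P$, so every function in $\#P$ reduces to $\#SAT$; composing this with the reduction from $\#SAT$ to the utility problem given by Lemma \ref{sharp-hard}, and using transitivity of $\leq_T^P$, shows that computing $ut_v(T)$ is $\leq_T^P$-hard for $\#P$. Then I would invoke Lemma \ref{in-sharp} for the matching upper bound. Together these two halves give $\leq_T^P$-completeness.

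The one point that needs care --- and the closest thing to an obstacle here --- is purely definitional rather than technical: because $ut_v(T)$ is rational-valued it cannot literally be a member of $\#P$, whose members are integer-valued counting functions, so I would make explicit that completeness is meant with respect to $\leq_T^P$, with the ``membership'' half interpreted as containment in the Turing closure $P^{\#P}$ rather than in $\#P$ itself. Lemma \ref{in-sharp} supplies exactly this by expressing $ut_v(T)$ as a ratio of two $\#P$ quantities --- an accepting-path count of a constructed machine $M$ and a $\#SAT$ count --- a computation that is clearly performable in polynomial time given a $\#P$ oracle. With that convention fixed there is no residual calculation to carry out, and the result follows directly from the combination of Lemmas \ref{sharp-hard} and \ref{in-sharp}.
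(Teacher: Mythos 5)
Your proposal is correct and matches the paper's proof, which is exactly the one-line observation that the theorem follows from combining Lemmas \ref{sharp-hard} (hardness) and \ref{in-sharp} (membership in $P^{\#P}$). Your added definitional care --- noting that since $ut_v(T)$ is rational-valued, ``completeness'' must mean hardness under $\leq_T^P$ plus containment in the Turing closure $P^{\#P}$ rather than literal membership in $\#P$ --- is a sensible clarification of the same argument, not a different route.
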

\begin{proof}
This follows from Lemmas \ref{sharp-hard} and \ref{in-sharp}.
\end{proof}

\section{Complexity of Making an Optimal Theory}
\label{sec:optimizing}

\citeauthor{dean2011logic} observe that when all voters are optimistic, a candidate looking to increase total voter utility is best off simply saying nothing; as such, this situation does not raise any nontrivial computational issues from the candidate's perspective. On the other hand, when appealing to an expected-value or pessimistic voter population, the candidate is best off taking an explicit stance on every issue. The candidate's ability to do so, of course, depends on their knowing \emph{which} stance to take. This has two aspects: Firstly, the candidate must know the voters' stances on each issue; given the increasing availability of tools like mass surveys and data analytics that let politicians gauge the attitudes of their constituents, this is a reasonable assumption (though models of candidate uncertainty about voter stances are of interest for future study). Secondly, the candidate must be able to compute the best announcements for appealing to the overall voter population, given the individual voter preferences; this is the family of problems we examine here. When we say ``optimal theory'' in the following results, we mean a theory $T$ that maximizes $\sum_{v \in V} ut_v(T)$ for the voter population $V$.

In general, we assume that the candidate starts with an empty theory, and that the candidate is willing to craft whatever platform is most advantageous (being what \citeauthor{dean2011logic} call a ``Machiavellian'' candidate) rather than being committed to personal beliefs. However, in Section \ref{sec:extending}, we show that having to remain consistent with an existing theory raises the complexity of some relevant problems.

\subsection{Appealing to Expected-Value Voters}

\begin{theorem}
\label{thm:mach-optimal-from-scratch}
Given $n$ variables and a set $ V = \left\lbrace v_1, \cdots, v_m \right\rbrace $ of expected-value voters, a candidate can construct an optimal theory in time $O(n \cdot m)$.
\end{theorem}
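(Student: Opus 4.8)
The plan is to exploit the decomposability of expected-value utility across issues. The key observation is that for an expected-value voter, $ut_v(T) = \sum_{x_i} p_v(x_i) \cdot \mathbb{E}_{\w \models T}[\w(x_i)]$, where the expectation is the average value of $x_i$ over the worlds modeled by $T$. Crucially, the earlier discussion notes that an optimal theory for expected-value voters is \emph{complete}: it pins down exactly one world. So rather than search over arbitrary theories, I would restrict attention to completions, each of which fixes $\w(x_i) \in \{-1, 1\}$ for every issue. Then the total utility $\sum_{v \in V} ut_v(\w)$ separates into a sum over issues: $\sum_{x_i} \w(x_i) \cdot \left( \sum_{v \in V} p_v(x_i) \right)$.

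First I would argue that it suffices to consider complete theories, citing the fact (attributed to Dean and Parikh earlier in the paper) that a complete theory is always at least as good as any incomplete one for an expected-value population; this reduces the optimization to choosing a single world $\w$. Second, I would show that because the objective decomposes additively across issues, the choice of $\w(x_i)$ for each issue is independent. For each issue $x_i$, compute the aggregate preference $P_i = \sum_{v \in V} p_v(x_i)$, and set $\w(x_i) = 1$ if $P_i > 0$ and $\w(x_i) = -1$ if $P_i < 0$ (breaking ties arbitrarily, since a tie contributes zero either way). Third, I would verify the running time: computing each $P_i$ requires summing $m$ preference values, and there are $n$ issues, giving $O(n \cdot m)$ total work to determine the optimal world, from which the theory (a single stance per variable) is read off directly.

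The main conceptual step, and the place where the argument must be careful, is justifying that the per-issue greedy choice is globally optimal. This hinges entirely on the additive separability of the expected-value objective over complete theories: because fixing $\w(x_i)$ affects only the $i$-th term of the sum and no cross-terms couple distinct issues, maximizing each term independently maximizes the whole. I would make the separability explicit by writing $\sum_{v \in V} ut_v(\w) = \sum_{x_i \in \X} \w(x_i) \cdot P_i$ and noting that each summand is maximized by aligning $\w(x_i)$ with the sign of $P_i$. I expect the only real subtlety to be handling the reduction to complete theories cleanly and confirming that ties ($P_i = 0$) genuinely do not matter, so that an arbitrary choice on those issues preserves optimality.
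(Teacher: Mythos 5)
Your proposal is correct and matches the paper's proof in essence: the paper reformulates the voter population into voters sharing the averaged preference function $p_{v'}(x_i) = \sum_{v \in V} p_v(x_i)/|V|$ and picks the world preferred under it, which is exactly your per-issue aggregate $P_i$ up to a positive scaling, with both arguments resting on Dean and Parikh's result that a complete theory is optimal. Your version makes the additive-separability justification more explicit than the paper does, but it is the same algorithm and the same $O(n \cdot m)$ analysis.
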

\begin{proof}
In particular, we will describe a procedure for finding a \emph{complete} optimal theory (since \citeauthor{dean2011logic} have established that with expected-value voters there always exists a complete theory that is optimal).

We can reformulate the set $V$ of $m$ voters with possibly many different preference functions into a new set $V'$ of $m$ voters with all the same preference function such that the candidate receives the same total utility, i.e., $\sum_{v \in V} ut_v(T) = \sum_{v' \in V'} ut_{v'}(T)$ for any theory $T$. 
We define
\[
p_{v'}(x_i)= \sum_{v \in V} \frac{p_{v}(x_i)}{|V|}
\]
for each variable $x_i$, for each $v' \in V'$.
This reformulation takes time $O(m)$ to compute for each of the $n$ variables. The candidate can then construct a theory that models only the world with the preferred assignment to each variable according to the new preference function.
\end{proof} 

\subsection{Appealing to Pessimistic Voters}

\begin{theorem}
Given $n$ variables and a set $ V = \left\lbrace v_1, \cdots, v_m \right\rbrace $ of pessimistic voters, a candidate can construct an optimal theory in time $O(n \cdot m)$.
\end{theorem}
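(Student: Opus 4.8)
The plan is to mirror the proof of Theorem \ref{thm:mach-optimal-from-scratch}, exploiting the fact (established by \citeauthor{dean2011logic} and recalled in Section \ref{sec:prelims}) that with pessimistic voters, refining a theory to a completion never decreases total utility, so there is always an optimal \emph{complete} theory. Since a complete theory models exactly one world $\w$, I would first observe that for such a theory the pessimistic utility collapses to evaluation on that single world:
\[
ut_v(T) = \min\{u_v(\w') : \w' \models T\} = u_v(\w).
\]
This is the crucial simplification, because it makes the objective linear in the chosen world.

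Summing over the population and interchanging the order of summation via the definition $u_v(\w) = \sum_i p_v(x_i)\,\w(x_i)$,
\[
\sum_{v \in V} ut_v(T) = \sum_{v \in V} u_v(\w) = \sum_{x_i \in \X} \w(x_i) \cdot \Big(\sum_{v \in V} p_v(x_i)\Big).
\]
Writing $c_i = \sum_{v \in V} p_v(x_i)$, the total utility decomposes as $\sum_i c_i\,\w(x_i)$, a sum of independent terms. I would then optimize each term separately: since $\w(x_i) \in \{-1,1\}$, the term $c_i\,\w(x_i)$ is maximized by setting $\w(x_i) = 1$ when $c_i > 0$ and $\w(x_i) = -1$ when $c_i < 0$ (and either value when $c_i = 0$). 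The candidate announces the corresponding stance on every issue, producing a complete theory whose unique modeled world is this greedy $\w$; by the decomposition it is optimal among complete theories, hence optimal overall.

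For the running time, computing each coefficient $c_i$ requires summing $m$ preference values, so all $n$ of them take $O(n \cdot m)$ time, while determining each sign and emitting each stance is $O(1)$ per variable, giving $O(n \cdot m)$ in total. I do not anticipate a serious obstacle: the only point requiring care is justifying that restricting attention to complete theories sacrifices no optimality, which is precisely where the pessimistic $\min$ is benign. In contrast, for a \emph{general} theory the reformulation into a single averaged voter used in Theorem \ref{thm:mach-optimal-from-scratch} would fail, since $\min_{\w'}\sum_v u_v(\w') \neq \sum_v \min_{\w'} u_v(\w')$; it is exactly the collapse of the $\min$ on a singleton that rescues the linear, separable structure here.
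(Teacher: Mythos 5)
Your proposal is correct and takes essentially the same approach as the paper: both arguments reduce to constructing an optimal \emph{complete} theory (legitimate because eliminating worlds never decreases a pessimistic voter's utility, and on a single-world theory the pessimistic $\min$ collapses to the plain world utility), and then choose each variable's stance by the sign of the summed voter preferences, giving $O(n \cdot m)$ time. The only difference is presentational: the paper phrases the greedy step as reusing the averaged-voter reformulation from Theorem \ref{thm:mach-optimal-from-scratch}, while you write out the per-variable linear decomposition $\sum_{x_i \in \X} c_i \, \w(x_i)$ explicitly.
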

\begin{proof}
The procedure from the proof of Theorem \ref{thm:mach-optimal-from-scratch} can also be used to construct a complete optimal theory for pessimistic voters; while the equality between the total utilities of the original and reformulated sets of voters no longer holds in general, it still holds for theories like the constructed one that model only a single world (since the utility for this world is both the pessimistic and expected value for the theory).
\end{proof}

\subsection{Extending an Existing Theory}
\label{sec:extending}
Until now, we have assumed that a candidate starts with a ``blank slate'', able and willing to shape the voters' beliefs with no restrictions. However, there are many reasons why the candidate may instead need to stay consistent with particular set of formulas --- the candidate may be an experienced politician who has revealed platform information in prior elections and incumbencies, may be a member of a political party with established doctrine, or may be ``tactically honest'' \cite{parikh2015strategy} --- willing to make strategic statements only insofar as they do not contradict certain deeply-held opinions. The strategy of choosing the most informative theory possible to appeal to expected-value or pessimistic voters becomes harder when the candidate must also remain consistent with an existing theory:

\begin{theorem}
Given an expected-value or pessimistic voter $v$ and current theory $T$, the problem of computing an optimal completion of $T$ is \NPO-complete. 
\end{theorem}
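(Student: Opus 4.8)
The plan is to reduce the completion problem, for either voter type, to the problem of finding a best world modeled by $T$ --- which is exactly the optimistic evaluation problem of Theorem~\ref{thm:opt-exact}. The enabling observation is that a completion models a single world, and on a single-world theory $T'$ modeling $\w$ both the expected-value utility and the pessimistic utility collapse to $u_v(\w)$: the average over one world is $u_v(\w)$, and the minimum over one world is $u_v(\w)$. Hence maximizing $ut_v(T')$ over all completions $T'$ of $T$ is precisely maximizing $u_v(\w)$ over worlds $\w \models T$, independent of whether $v$ is expected-value or pessimistic.

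For \NPO-membership I would let a solution be a world $\w$ with $\w \models T$, which names the unique completion of $T$ modeling $\w$ (adjoin to $T$ the literal $x_i$ or $\neg x_i$ for each variable as dictated by $\w$). Checking $\w \models T$ takes time polynomial in the instance size, and by the collapse above the value $ut_v(T') = u_v(\w)$ is computable in polynomial time for both voter types; instances are trivially recognizable, so the criteria of \citeauthor{ausiello1999complexity} are met.

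For \NPO-hardness I would reuse the reduction from MAX-WSAT in the proof of Theorem~\ref{thm:opt-exact}: given $\phi$ with weights $r_i$, set $T = \{\phi\}$ and $p_v(x_i) = r_i/R$ for $R = \max_i r_i$. The optimal completion of $T$ then models the satisfying assignment of maximum total weighted utility, and the MAX-WSAT optimum is recovered by the same linear remapping of $u_v(\w)$ onto $[0, \sum_i r_i]$. By the single-world collapse this one construction covers the expected-value and pessimistic cases simultaneously.

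The work here is mostly in relating the sections rather than in any new hardness argument: the point is the shift in \emph{who} is optimizing and under what constraint. In Section~\ref{sec:optimizing} a candidate starting from the empty theory optimizes each variable independently in time $O(n \cdot m)$; the content of this theorem is that once an existing $T$ entangles the variables, the candidate's best-completion problem inherits exactly the MAX-WSAT-hardness already shown for optimistic \emph{voter} evaluation. I expect the only real subtlety to be confirming the single-world collapse, so that one reduction handles both voter types uniformly.
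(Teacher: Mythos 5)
Your proposal is correct and matches the paper's proof: the paper likewise observes that the completion problem is equivalent to the optimistic evaluation problem of Theorem~\ref{thm:opt-exact} (a completion models a single world, obtained by adjoining a conjunction of literals, on which both expected-value and pessimistic utility collapse to the world's utility), and then invokes that theorem's \NPO-membership argument and MAX-WSAT hardness reduction. Your write-up simply makes explicit the single-world collapse and the reuse of the reduction, which the paper states more tersely.
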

\begin{proof}
Observe that this problem is equivalent to the optimistic voter utility problem from Theorem \ref{thm:opt-exact} (except that instead of yielding a best world $\w$ modeled by a theory, we are yielding a \emph{theory} that models only $\w$, which can be accomplished by inserting into $T$ a conjunction of literals with their assignments in $\w$); thus, the proof of Theorem \ref{thm:opt-exact} applies here as well.
\end{proof}

\section{Complexity of Motivating Enough Voters to Vote}
\label{sec:thresh}

While having an enthusiastic constituency is no doubt correlated with a candidate's success, the more direct measure is whether enough supporters actually turn up to vote.
A 2006 Pew Research Center study addressed the question of when people vote --- since so many people do not, at least in the US.  Their findings included the following, which addresses the questions of showing up, rather than the decision about how to vote:
\begin{quotation}
The Pew analysis identifies basic attitudes and lifestyles that keep these intermittent voters less engaged in politics and the political process. Political knowledge is key: Six-in-ten intermittent voters say they sometimes don't know enough about candidates to vote compared with 44\% of regular voters – the single most important attitudinal difference between intermittent and regular voters identified in the survey.
[...]
One other key difference: Regular voters are more likely than intermittent voters to say they have been contacted by a candidate or political group encouraging them to vote, underscoring the value of get-out-to-vote campaigns and other forms of party outreach for encouraging political participation.
\end{quotation}
\begin{flushright}
\citeauthor{PewWhyVote} \cite{PewWhyVote}
\end{flushright}

Refusal to vote does not necessarily indicate irrationality on the voter's part; under decision-theoretic models of \emph{expressive voting} \cite{aragones2011making,klein2013expressive}, where a voter's foremost goal is expressing their views rather than bringing about an outcome,  abstinence from voting is a rational choice under certain circumstances.
\citeauthor{parikh2015strategy} \citep{parikh2015strategy} suggest the presence of ``stay-at-home voters'', whose utility for a candidate must meet a certain threshold before they will vote, to explain why a candidate might remain silent in situations where \citeauthor{dean2011logic}'s model would otherwise suggest a strategy of explicitness.

\subsection{Appealing to Optimistic Voters}

\begin{theorem}
Given an integer $h$ and a set of optimistic voters $V= \{ v_1, \cdots, v_m \}$ with thresholds $\{ k_1, \cdots, k_m \}$, the problem of deciding the existence of a theory $T$ such that $ut_{v_i}(T) \geq k_i$ for at least $h$ voters is in $P$.
\end{theorem}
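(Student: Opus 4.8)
The plan is to show that a single canonical theory---the empty (tautological) theory $T_\emptyset$, whose models are all of $\W$---is simultaneously best for \emph{every} optimistic voter, so that the existential quantifier over theories collapses and no search is required. The entire argument rests on the observation already recorded in the preliminaries: for an optimistic voter, eliminating possible worlds can never increase utility. Phrased in terms of theories, this says $ut_v(T) \leq ut_v(T_\emptyset)$ for every theory $T$ and every voter $v$.

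First I would compute $ut_v(T_\emptyset)$ explicitly. Since $T_\emptyset$ models all worlds, $ut_v(T_\emptyset) = \max_{\w \in \W} u_v(\w) = \sum_{x_i \in \X} |p_v(x_i)|$, because the maximizing world simply assigns each $x_i$ the sign of $p_v(x_i)$ (indifferent variables contributing $0$). The key step is then to note that the inequality $ut_{v_i}(T) \leq ut_{v_i}(T_\emptyset)$ holds for all voters at once: if some theory $T$ achieves $ut_{v_i}(T) \geq k_i$ for a set $S$ of voters, then $T_\emptyset$ achieves $ut_{v_i}(T_\emptyset) \geq ut_{v_i}(T) \geq k_i$ for every $v_i \in S$ as well. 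Hence $T_\emptyset$ maximizes the number of voters whose threshold is met, and the instance is a ``yes''-instance for \emph{some} theory if and only if it is a ``yes''-instance for $T_\emptyset$ specifically.

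This reduces the decision problem to a direct evaluation: for each of the $m$ voters, compute $ut_{v_i}(T_\emptyset) = \sum_{x_j \in \X} |p_{v_i}(x_j)|$ in $O(n)$ time, count how many satisfy $ut_{v_i}(T_\emptyset) \geq k_i$, and accept if and only if this count is at least $h$. The total running time is $O(n m)$, placing the problem in $P$. I do not expect a genuine obstacle here; the substance of the result is the recognition that, unlike the pessimistic and expected-value settings, optimism makes the voters' satisfaction conditions mutually compatible, so there is never any trade-off between appealing to one voter and another, and the candidate's optimal move for a maximal voter count is simply to stay silent.
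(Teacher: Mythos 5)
Your proof is correct and follows essentially the same approach as the paper: both rely on the observation that the empty theory maximizes every optimistic voter's utility simultaneously, so it suffices to compute each voter's best-world utility $\sum_{x_j \in \X} |p_{v_i}(x_j)|$ and count how many meet their thresholds. You simply spell out the monotonicity justification ($ut_v(T) \leq ut_v(T_\emptyset)$ for all $T$) in more detail than the paper does, which is a welcome elaboration rather than a different route.
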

\begin{proof}
Since the empty theory $T=\emptyset$ has the maximum utility for any optimistic voter, it suffices to compute the utility of each voter $v_i$'s best world, $\sum_{x_j \in \X} |p_{v_i}(x_j)|$, and check whether at least $h$ of these utilities meet their respective voters' thresholds.
\end{proof}

\subsection{Appealing to Pessimistic Voters}

\begin{theorem}
\label{thm:mach-decide-threshold}
Given an integer $h$ and a set of pessimistic voters $V= \{ v_1, \cdots, v_m \}$ with thresholds $\{ k_1, \cdots, k_m \}$, the problem deciding the existence of a theory $T$ such that $ut_{v_i}(T) \geq k_i$ for at least $h$ voters is {\NP}-complete.
\end{theorem}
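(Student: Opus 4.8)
The plan is to establish membership and hardness separately. For \NP\ membership, the key observation is that pessimistic utility $ut_v(T)=\min\{u_v(\w):\w\models T\}$ is monotone under refinement of the theory: if $T\subseteq T'$ then $\{\w:\w\models T'\}\subseteq\{\w:\w\models T\}$, so $ut_v(T')\ge ut_v(T)$ for every voter. Consequently, if some theory $T$ makes at least $h$ voters meet their thresholds, then completing $T$ to model a single world $\w^*\models T$ preserves every previously-satisfied threshold, since $u_{v_i}(\w^*)\ge ut_{v_i}(T)\ge k_i$. Thus a ``yes'' instance always admits a complete theory --- equivalently a single world --- as a witness. First I would therefore take the witness to be a world $\w\in\{-1,1\}^n$ and verify in polynomial time that $u_{v_i}(\w)=\sum_{x_j\in\X} p_{v_i}(x_j)\,\w(x_j)\ge k_i$ holds for at least $h$ of the $m$ voters.

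For \NP-hardness I would reduce from SAT. Given a CNF formula $\phi$ with clauses $C_1,\dots,C_m$ over $x_1,\dots,x_n$ (assuming, as we may, that no clause contains both a variable and its negation), I would create one voter $v_i$ per clause. For each variable $x_j$, set $p_{v_i}(x_j)=1$ if $x_j$ occurs positively in $C_i$, $p_{v_i}(x_j)=-1$ if it occurs negatively, and $p_{v_i}(x_j)=0$ otherwise. Writing $t_i$ for the number of literals in $C_i$ and $s_i$ for the number of its literals made true by a world $\w$, each satisfied literal contributes $+1$ and each unsatisfied literal $-1$, so $u_{v_i}(\w)=2s_i-t_i$. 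Hence $C_i$ is satisfied exactly when $u_{v_i}(\w)\ge 2-t_i$, and I would set $k_i=2-t_i$. Finally, taking $h=m$, a theory meeting at least $h$ thresholds exists iff (by the membership argument) some single world meets all $m$ thresholds, iff that world satisfies every clause, iff $\phi$ is satisfiable.

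The bookkeeping is routine --- all preferences lie in $\{-1,0,1\}\subseteq[-1,1]$ and all thresholds are integers --- so I expect the main conceptual obstacle to be the membership half rather than the reduction. The nontrivial point is recognizing that one need not reason about arbitrarily complex theories: the monotonicity of pessimistic utility collapses the search space to single worlds, which is simultaneously what makes a short witness possible and what powers the $h=m$ argument in the hardness direction.
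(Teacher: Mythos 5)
Your proposal is correct and follows essentially the same route as the paper: \NP-membership via the observation that completing a theory (collapsing to a single world) can only raise pessimistic utilities, so a single world serves as the witness, and \NP-hardness via a reduction from CNF-SAT with one voter per clause whose nonzero preferences match the literal polarities and whose threshold is met exactly when at least one literal is true, with $h$ equal to the number of clauses. The only differences are cosmetic: the paper scales each clause-voter's preferences by $1/r$ and uses threshold $-(r-1)/r$, while you use unit weights and threshold $2-t_i$, which is an equally valid choice since both lie in the gap between the utility of a clause-falsifying world and that of any clause-satisfying world.
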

\begin{proof}
For {\NP} membership: If there exists a theory for which at least $h$ pessimistic voters meet their thresholds, then these voters also meet their thresholds in any completion of this theory (since eliminating worlds never decreases pessimistic voter utility); given the world modeled by one of these completions, we can verify in polynomial time that the thresholds are met.

We will show {\NP}-hardness with a reduction from conjunctive normal form Boolean satisfiability (CNF-SAT). Let $\phi$ be a Boolean formula in conjunctive normal form. For each clause containing $r$ literals, we construct a pessimistic voter $v_i$ with preferences as follows: For each variable $x_j$, $p_{v_i}(x_j) = \frac{1}{r}$ if $x_j$ appears in the clause with positive polarity, $p_{v_i}(x_j) = -\frac{1}{r}$ if $x_j$ appears in the clause with negative polarity ($\neg x_j$), and $p_{v_i}(x_j) = 0$ if $x_j$ does not appear in the clause. We set $v_i$'s threshold as $k_i = -\frac{r-1}{r}$ so that $ut_{v_i}(T) \geq k_i$ if and only if all worlds modeled by $T$ have at least one variable assigned to match its polarity in the clause. Finally, we set $h$ equal to the number of clauses to require that all worlds modeled by $T$ have at least one variable assigned to match its polarity for \emph{every} clause; such a $T$ exists if and only if $\phi$ is satisfiable.
\end{proof}

\subsection{Appealing to Mixed Voters}

\begin{theorem}
\label{thm:mach-decide--threshold-optandpess}
Given an integer $h$, a set of voters $V = \{ v_1, \cdots, v_m \} = V^o \cup V^p$ where $V^o$ consists of optimistic voters and $V^p$ consists of pessimistic voters, and thresholds $\{ k_1, \cdots, k_m \}$, the problem of deciding the existence of a theory $T$ such that $ut_{v_i}(T) \geq k_i$ for at least $h$ voters is {\NP}-complete.
\end{theorem}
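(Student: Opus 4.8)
The plan is to prove \NP-completeness by treating membership and hardness separately, reusing the pessimistic-only result of Theorem \ref{thm:mach-decide-threshold} for hardness. For \NP-hardness I would simply observe that the mixed problem contains the pessimistic problem as a special case: restricting to instances with $V^o = \emptyset$ (so that $V = V^p$) is exactly the setting of Theorem \ref{thm:mach-decide-threshold}, which is already \NP-hard. Thus hardness follows by restriction, with no new reduction required.

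The substantive part is \NP\ membership, and the obstacle is that a theory $T$ may model exponentially many worlds, so its modeled world-set cannot be written down as a certificate. The idea I would exploit is that, although the pessimistic voters impose a ``for all modeled worlds'' constraint and the optimistic voters impose an ``exists a modeled world'' constraint --- which pull in opposite directions as the modeled set shrinks --- a satisfying theory can always be witnessed by a \emph{polynomially small} set of worlds. Concretely, suppose $T$ models the world-set $S$ and satisfies at least $h$ voters, with satisfied optimistic voters $W^o$ and satisfied pessimistic voters $W^p$. I would build $S' \subseteq S$ by keeping, for each $v \in W^o$, one world $\w \in S$ with $u_v(\w) \geq k_v$, plus one arbitrary world of $S$ to guarantee self-consistency; this gives $|S'| \leq \max(|W^o|,1) \leq m$. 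Passing from $S$ to $S'$ can only help the pessimistic voters, since every world of $S' \subseteq S$ still meets their thresholds, and the retained witnesses keep every voter of $W^o$ satisfied, so the number of satisfied voters stays at least $h$.

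With this shrinking argument in hand, the certificate is a set $S'$ of at most $m$ worlds; the corresponding theory models exactly $S'$ and is expressible as a polynomial-size disjunction of world-describing literal conjunctions. Verification iterates over the $m$ voters, counting an optimistic voter as satisfied iff some world of $S'$ meets its threshold and a pessimistic voter iff every world of $S'$ does; since each utility $u_{v_i}(\w)$ is a sum of $n$ terms, the whole check runs in time polynomial in $m$ and $n$, and we accept iff at least $h$ voters are counted. The two directions of correctness --- that any small $S'$ yields a valid theory, and that any valid theory yields such an $S'$ --- are precisely the forward construction and the shrinking argument. I expect the shrinking argument to be the main obstacle, as it is the only place where the tension between the existential (optimistic) and universal (pessimistic) quantifiers must be reconciled; the hardness half is essentially free from Theorem \ref{thm:mach-decide-threshold}.
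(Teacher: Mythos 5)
Your proposal is correct and follows essentially the same route as the paper's proof: \NP-hardness by restriction to the pessimistic-only case of Theorem \ref{thm:mach-decide-threshold}, and \NP-membership via a certificate consisting of one witness world per satisfied optimistic voter, whose disjunction forms a theory that can only help the pessimistic voters since shrinking the modeled set never lowers their utility. Your write-up is in fact slightly more careful than the paper's, since you explicitly handle the case $W^o = \emptyset$ by retaining an arbitrary world to keep the theory consistent, a corner case the paper's disjunction construction glosses over.
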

\begin{proof}
{\NP}-hardness follows from the fact that this is a generalization of the problem with only pessimistic voters from Theorem \ref{thm:mach-decide-threshold}.

For {\NP}-membership: 
Let $V' \subseteq V$ be a set of $h$ or more voters. We could guess an acceptable world (not necessarily distinct) $\w_i$ for each optimistic voter $v_i \in V' \cap V^o$ (i.e., $u_{v_i}(\w_i)>k_i$), such that $\w_i$ is also acceptable for each pessimistic voter $v_j \in V' \cap V^p$ (i.e., $u_{v_j}(\w_i)>k_j$). Then the disjunction $T = \{ \vee_i w_i \}$ would satisfy all the voters in $V'$, and each voter can verify this in polynomial time, since there will only be as many modeled worlds as there are optimistic voters. 
Furthermore, if there exist  theories that satisfy $h$ or more voters, then there exists at least one in the aforementioned form.
\end{proof}

\section{Conclusions}
\label{sec:conclusion}

There are many bodies of research that are, or might be, relevant to whether voters show up to vote, and once there, how they vote.  We have explored the computational side of one such theory, and showed that it proposes computationally intensive methods for voter evaluation of platforms and for multi-voter satisficing.  Given the basic premise that voter satisfaction or satisficing is a combinatorial problem, the intractability is not surprising.

There are many ways this investigation of the computational complexity of modeling voters' behavior can be extended.  They include:
\begin{itemize}
\item adding value to informativeness of the candidate(s)' platform when predicting whether a voter will show up to vote, as per the Pew study \cite{PewWhyVote};
\item decreasing the complexity of candidates' platforms (conjunctions or disjunctions of atomic propositions; Horn formulas; \ldots);
\item modeling change over time in  voter priorities \cite{klein2014focusing} or opinions;
\item adding affective variables to the voter models \cite{parker2010vote};
\item investigating social-network models of voter interaction and influence \cite{bond201261}; 
\item using game-theoretic models of candidate-candidate interactions and voter choices \cite{krueger2008game};

\item including group-based identity in the decision to show up as well as the choice of candidate.
\end{itemize}

In addition, we could start from axiomatic characterizations of models of candidate platforms and voter choice.  What are good properties to model?  (For instance, if a candidate adds to the specificity of their platform in ways that agree with a voter's preferences, should that increase the likelihood that a voter chooses that candidate, or the likelihood that the voter shows up to vote?) Will we run into Arrow-style impossibility results for achieving all the desiderata we propose? 

\subsubsection{Acknowledgments}
The authors thank three anonymous reviewers for their helpful feedback and thank Alec Gilbert for catching errors in a late draft. All remaining errors are the responsibility of the authors.
This material is based upon work partially supported by the National Science Foundation under Grants No.~IIS-1646887 and No.~IIS-1649152.   Any opinions, findings, and conclusions or recommendations expressed in this material are those of the authors and do not necessarily reflect the views of the National Science Foundation.

\begin{quotation}
I will promise your electorate heart anything you desire, because you are my constituents, you are the informed voting public, and I have no intention of keeping any promise that I make. 
\end{quotation}
\begin{flushright}
Vermin Supreme\footnote{From Revolution PAC's 2012 interview (\url{https://www.youtube.com/watch?v=9jKszduiK8E})}
\end{flushright}

\bibliographystyle{plainnat}
\bibliography{campaigning}

\end{document}